\newtheorem{defn}{Definition}
\newtheorem{thm}{Theorem}
\newtheorem{lem}{Lemma}
\newtheorem{cor}{Corollary}
\newtheorem{rem}{Remark}
\newtheorem{exa}{Example}
\DeclareMathOperator{\E}{E}
\DeclareMathOperator{\KL}{KL}
\DeclareMathOperator{\argmax}{argmax}
\begin{document}


\title{Probabilistic Duality for Parallel Gibbs Sampling without Graph Coloring}
\author[1]{Lars Mescheder}
\author[2]{Sebastian Nowozin}
\author[1]{Andreas Geiger}
\affil[1]{Autonomous Vision Group, MPI T\"ubingen }
\affil[2]{Microsoft Research, Cambridge}

\maketitle

\begin{abstract}
We present a new notion of probabilistic duality for random variables involving mixture distributions.
Using this notion, we show how to implement a highly-parallelizable Gibbs sampler for weakly coupled discrete pairwise graphical models with strictly positive factors that requires almost no preprocessing and is easy to implement. Moreover, we show how our method can be combined with blocking to improve mixing. Even though our method leads to inferior mixing times compared to a sequential Gibbs sampler, we argue that our method is still very useful for large dynamic networks, where factors are added and removed on a continuous basis, as it is hard to maintain a graph coloring in this setup. Similarly, our method is useful for parallelizing Gibbs sampling in graphical models that do not allow for graph colorings with a small number of colors such as densely connected graphs.
\end{abstract}

\section{Introduction}
\label{sec:intro}
Inference in general discrete graphical models is hard. Besides variational methods,
the main approach for inference in such models is through running a Markov chain
that in the limit draws samples from the true posterior distribution. 

One such Markov-Chain is given by the so-called \emph{Gibbs-sampler} that was first introduced
introduced by Geman and Geman \cite{geman1984stochastic}. In each step, one random variable is resampled given all the others,
using the conditional probability distributions in the graphical model. Under mild
hypotheses, the Gibbs sampler produces an ergodic Markov chain that
converges to the target distribution. The main appeal of the Gibbs
sampler lies in its simplicity and ease of implementation. Unfortunately,
for highly coupled random variables, mixing of this Markov chain
can be prohibitively slow. Moreover, in order to achieve
ergodicity, we have to sample one variable after another, yielding
an inherently sequential algorithm. However, with the advent of affordable
parallel computing hardware in the form of GPUs, it is desirable to
have a parallel sampling algorithm. Early attempts tried running all
the update steps in parallel \cite{geman1984stochastic}, however this update schedule
usually does not converge to the target distribution \cite{newman2007distributed}.

Another common approach to parallel Gibbs-sampling is to compute a graph coloring of the underlying graph and then perform
Gibbs-sampling blockwise \cite{gonzalez2011parallel}. However, it is not always straightforward to find an appropriate graph coloring and
it is hard to maintain such a graph coloring in a dynamic setup, i.e. when factors are added and removed on a continuous
basis.

In this paper, we show a simple method of parallelizing a Gibbs-sampler that does not
require a graph coloring. This is particularly useful for situations in which a graph
coloring is hard to obtain or the graph topology changes frequently, which requires to maintain
or to recompute the graph coloring.
\section{Related Work}
\label{sec:related}
An early attempt at parallelization for Ising models was described
by Swendsen and Wang \cite{swendsen1987nonuniversal}. Their method
was generalized to arbitrary probabilistic graphical models in \cite{barbu2005generalizing}.
However, while the Swendsen Wang-algorithm mixes fast for the Ising
model with no unary potentials, this needs not be the case for general
probabilistic graphical models \cite{higdon1998auxiliary,martens2010parallelizable}.
Higdon \cite{higdon1998auxiliary} presents a method for performing
partial Swendsen Wang-updates. However, sampling using Higdon's
method requires sampling from a coarser graphical model as a subproblem,
which Higdon tackles using conventional sampling methods. Our dualization
strategy allows to circumvent this step, so that only standard clusterwise
sampling as in \cite{swendsen1987nonuniversal} is required.

\cite{gonzalez2011parallel} describes two ways of parallelizing Gibbs
sampling in discrete Markov random fields. The first method relies
on computing graph colorings, the second on decomposing the graph
into blocks (called splashes) consisting of subgraphs with limited
tree width. Both methods are complimentary in that graph colorings
work well for loosely-coupled graphical models whereas splash sampling
works well in the strongly coupled case. However, computing
a minimal graph coloring is a NP-hard problem \cite{garey1974some} and the number
of colorings necessary depends on the graph. Moreover, it is hard to maintain
a graph coloring in a dynamic setting in which the graph topology is not constant anymore.
Our approach does not
suffer from these issues and requires almost no preprocessing. Moreover,
our approach can be combined with splash sampling. Whereas the approach
in \cite{gonzalez2011parallel} requires the splashes to be induced
subgraphs of the graphical model, our approach allows to select arbitrary
subgraphs of the graphical model as splashes, making it possible to
use splashes containing many variables.

Schmidt et al. show in \cite{schmidt2010generative} how a Gaussian
scale mixtures (GSMs) can be used for efficiently sampling from fields
of experts \cite{roth2009fields}. Our approach is very similar to
theirs, but deals with discrete graphical models. Moreover, whereas	
Schmidt et al. started with a model in a primal-dual formulation and
trained it on data, we focus on decomposing existing graphical models
and mainly use duality for inference. In fact, both techniques can
be subsumed in the framework of exponential family harmoniums \cite{welling2004exponential},
which makes it possible to deal with models consisting of both discrete
and continuous random variables.

Martens et al. \cite{martens2010parallelizable} show how to sample
from a discrete graphical model using auxiliary variables. Their approach
is similar to ours, but relies on computing the (sparse) Cholesky
decomposition of a large matrix beforehand. Our approach does not
have this issue.

A variant of our algorithm that computes expectations instead of performing sampling 
corresponds to the mean-field-algorithm
for junction tree-approximations in \cite{wiegerinck2000variational}. We can show that our algorithm
minimizes an upper bound to the true mean-field objective.
However, whereas the mean-field algorithm in \cite{wiegerinck2000variational}
has to recalibrate the tree for each update of a potential, our algorithm
updates all the potentials at once with only one run of the junction-tree-algorithm.

Schwing et al. \cite{schwing2011distributed} designed a system that
performs belief propagation in a distributed way. Our approach has
a similar goal but follows a different strategy: while Schwing et
al. achieve parallelism through a convex formulation, we augment the
probabilistic model with additional random variables.

\section{Probabilistic duality}\label{sec:probabilistic-duality}

We first define the notion of duality of random variables. Note that
this definition is very similar to the Lagrange functional of a convex
optimization problem in convex analysis.
\begin{defn}
Let $x\in\Omega_{x}$ and $\theta\in\Omega_{\theta}$ denote random
variables. We call functions $s:\Omega_{x}\rightarrow V$ and $r:\Omega_{\theta}\rightarrow V$
to a common vector space with some bilinear form link functions.
We say $x$ and $\theta$ are dual to each
other via $(s,r)$ if the joint distributions can be written as
\[
p(x,\theta)=h(x)g(\theta)\mathrm{e}^{\langle s(x),r(\theta)\rangle}
\]

with positive functions $h:\Omega_{x}\rightarrow\mathbb{R}$ and $g:\Omega_{\theta}\rightarrow\mathbb{R}$.
\end{defn}
In the language of \cite{welling2004exponential}, a dual pair of
random-variables is simply an \emph{exponential family harmonium}.
For a pair of link functions $(s,r)$ and $h:\Omega_{x}\rightarrow\mathbb{R}$,
$g:\Omega_{\theta}\rightarrow\mathbb{R}$ some real valued functions
we now define the concept of an $(s,r)$-transform.
\begin{defn}
The $(s,r)$-transforms of $h:\Omega_{x}\rightarrow\mathbb{R}$, $g:\Omega_{\theta}\rightarrow\mathbb{R}$
is defined as
\begin{align*}
H(\theta) & :=\sum_{x}h(x)\mathrm{e}^{\langle s(x),r(\theta)\rangle}\\
G(x) & :=\sum_{\theta}g(\theta)\mathrm{e}^{\langle s(x),r(\theta)\rangle}.
\end{align*}

\end{defn}
Note the resemblance to the notion\emph{ convex conjugacy}. The following
simple lemma is central to the rest of the theory:
\begin{lem}
Let be $x$ and $\theta$ two dually related random variables with
joint probability density as above. Then
\begin{align*}
p(x) & =h(x)G(x)\\
p(\theta) & =H(\theta)g(\theta)\\
p(x\mid\theta) & =\frac{h(x)}{H(\theta)}\mathrm{e}^{\langle s(x),r(\theta)\rangle}\\
p(\theta\mid x) & =\frac{g(\theta)}{G(x)}\mathrm{e}^{\langle s(x),r(\theta)\rangle}.
\end{align*}

\end{lem}
Formally, this is similar to the notion of duality in convex optimization,
we call the problem of sampling from $p(x)$ the \emph{primal} and
the corresponding problem of sampling from $p(\theta)$ the \emph{dual
sampling problem}. The primal and dual problems are linked to each
other via the conditional densities $p(x\mid\theta)$ and $p(\theta\mid x)$.
Note that both $p(x\mid\theta)$ and $p(\theta\mid x)$ are in the
exponential family.

Another view is that we represent $p(x)$ as a mixture of probability
distributions $p(x\mid\theta)$ from a specified exponential family
determined by $s(x)$ and $h(x)$. The density of the mixture parameters
is then given by $p(\theta)$.

To obtain a dual formulation of a sampling problem, we have to decompose
$p(x)$ as 
\[
p(x)=h(x)G(x)=h(x)\sum_{\theta}g(\theta)\mathrm{e}^{\langle s(x),r(\theta)\rangle}
\]
with some functions $g(\theta)$ and $h(x)$. $p(x)$ is then the
marginal of 
\[
p(x,\theta)=h(x)g(\theta)\mathrm{e}^{\langle s(x),r(\theta)\rangle}.
\]
We show how this can be done for the discrete case in Section \ref{sec:binary-pairwise-MRFs}.

Further evidence can be incorporated by replacing $s(x)$ and $h(x)$ with 
\[
\tilde{s}(x)=s(x,x_{e})\quad\text{and}\quad\tilde{h}(x,x_{e})=h(x,x_{e}).
\]

The lemma already shows a possible strategy how to sample from $p(x)$ using a simple Gibbs sampler:
first sample from $p(x\mid\theta)$, then from $p(\theta\mid x)$
and so on. As both $x$ and $\theta$ are generally  high-dimensional,
sampling from $p(x\mid\theta)$ and from $p(\theta\mid x)$ could potentially cause problems.
However, we show that $p(x\mid\theta)$ and $p(\theta\mid x)$ both
factorize in Markov random fields for appropriate choices $s$ and
$r$ which yields algorithms that are very easy to parallelize.

\section{Duality in MRFs}

We now take a closer look at sampling in Markov random fields. The
\emph{Hammersley-Clifford theorem }states that the joint probability
density of nodes in the MRF can be decomposed as
\[
p(x)=\frac{1}{Z}\prod_{i=1}^N p_{i}(x)
\]

with some probability measures $p_{i}$ and normalization constant
$Z$. In general the $p_{i}$ only depend on a small subset of the
components of $x$ (e.g. the cliques in the MRF). Assume that for
every $p_{i}$ we have a random variable $\theta_{i}$, so that $x$
and $\theta_{i}$ are dual to each other via $(s,r_{i})$ with respect
to $p_{i}$. We can then write 
\[
p_{i}(x,\theta_{i})=h_{i}(x)g_{i}(\theta_{i})\mathrm{e}^{\langle s(x),r_{i}(\theta_{i})\rangle}.
\]

\begin{thm}\label{thm:binary-pairwise-fact}
Let $\theta:=(\theta_{1},\dots,\theta_{N})$ and $r(\theta):=\sum_{i}r_{i}(\theta_{i})$,
$h(x):=\prod_{i}h_{i}(x)$ and $g(\theta):=\prod_{i}g_{i}(\theta_{i})$. 

Then $p(x)$ is the marginal of 
\[
p(x,\theta)\propto h(x)g(\theta)\mathrm{e}^{\langle s(x),r(\theta)\rangle}.
\]

Thus if $p(x,\theta)\propto h(x)g(\theta)\mathrm{e}^{\langle s(x),r(\theta)\rangle}$,
$x$ and $\theta$ are dual to each other. The marginal distribution
of $\theta$ is given by
\[
p(\theta)\propto H(\theta)\prod_{i}g_{i}(\theta).
\]
\end{thm}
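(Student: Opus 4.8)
The plan is to verify the three assertions of Theorem~\ref{thm:binary-pairwise-fact} in order, reducing each to a direct computation together with an application of the central Lemma. First I would show that $p(x)$ is the marginal of $p(x,\theta)\propto h(x)g(\theta)\mathrm{e}^{\langle s(x),r(\theta)\rangle}$. Starting from the right-hand side, I would substitute the definitions $h(x)=\prod_i h_i(x)$, $g(\theta)=\prod_i g_i(\theta_i)$ and $r(\theta)=\sum_i r_i(\theta_i)$, use bilinearity to write $\mathrm{e}^{\langle s(x),r(\theta)\rangle}=\prod_i \mathrm{e}^{\langle s(x),r_i(\theta_i)\rangle}$, and then observe that the sum over $\theta=(\theta_1,\dots,\theta_N)$ factorizes into a product of independent sums over each $\theta_i$. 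Each such sum is exactly $G_i(x):=\sum_{\theta_i} g_i(\theta_i)\mathrm{e}^{\langle s(x),r_i(\theta_i)\rangle}$, so the marginal becomes proportional to $\prod_i h_i(x)G_i(x)=\prod_i p_i(x)\cdot(\text{const})$, where the last equality uses the per-factor version of the Lemma ($p_i(x)=h_i(x)G_i(x)$). By Hammersley--Clifford this is proportional to $p(x)$, which settles the first claim.

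Next I would establish that $x$ and $\theta$ are dual to each other whenever $p(x,\theta)\propto h(x)g(\theta)\mathrm{e}^{\langle s(x),r(\theta)\rangle}$. This is almost immediate: the joint already has the required form from the Definition of duality, provided $h$ and $g$ are positive, which holds because each $h_i$ and $g_i$ is positive (as they come from the per-factor dual decompositions) and products of positive functions are positive; the proportionality constant can be absorbed into, say, $g$. So the displayed factorization \emph{is} a witness that $x,\theta$ are dual via $(s,r)$, and no further work is needed beyond noting the normalization.

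Finally, for the marginal of $\theta$, I would integrate (sum) the joint over $x$. By the Lemma applied to the \emph{global} dual pair $(x,\theta)$, we have $p(\theta)=H(\theta)g(\theta)$ where $H(\theta)=\sum_x h(x)\mathrm{e}^{\langle s(x),r(\theta)\rangle}$; substituting $g(\theta)=\prod_i g_i(\theta_i)$ yields exactly $p(\theta)\propto H(\theta)\prod_i g_i(\theta_i)$, which is the claimed formula (with the mild abuse of writing $g_i(\theta)$ for $g_i(\theta_i)$, matching the statement). The one subtlety worth spelling out is that $H(\theta)$ here is the transform of the \emph{product} $h(x)=\prod_i h_i(x)$, which in general does \emph{not} factor over $i$ — this is precisely the ``coarser graphical model'' coupling that makes the dual problem nontrivial — so I would be careful not to claim $H(\theta)=\prod_i H_i(\theta_i)$.

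The main obstacle, such as it is, is bookkeeping rather than depth: one must handle the normalization constants consistently (the $\propto$ signs hide a single global $Z$ plus per-factor constants absorbed into the $g_i$'s) and be explicit about where bilinearity of $\langle\cdot,\cdot\rangle$ is used to split the exponential. I would also state the positivity assumptions on $h_i,g_i$ up front so that the appeal to the Definition of duality is clean. Everything else follows by substituting the definitions of $h,g,r$ and invoking the Lemma once globally and once per factor.
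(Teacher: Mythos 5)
Your proposal is correct and matches the paper's argument: the core step is the same computation (expanding each $G_i(x)=\sum_{\theta_i} g_i(\theta_i)\mathrm{e}^{\langle s(x),r_i(\theta_i)\rangle}$ and using bilinearity to interchange product and sum), merely run in the reverse direction, starting from the joint and marginalizing rather than starting from $p(x)\propto\prod_i h_i(x)G_i(x)$ and building up the joint. Your explicit verification of the duality claim and of $p(\theta)\propto H(\theta)\prod_i g_i(\theta_i)$ via the central Lemma, plus the caution that $H(\theta)$ does not factor over $i$, are sound additions that the paper leaves implicit.
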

\begin{proof}
We have
\begin{align*}
p(x) & \propto\prod_{i} h_{i}(x)G_{i}(x)
 = \prod_{i} \left[ h_{i}(x)\sum_{\theta_{i}}g_{i}(\theta_{i})\mathrm{e}^{\langle s(x),r_{i}(\theta_{i})\rangle} \right]\\
 & =\sum_{\theta_{1}}\dots\sum_{\theta_N}\left[ \prod_{i}h_{i}(x)\right] \left[\prod_{i}g_{i}(\theta_{i})\right] \mathrm{e}^{\langle s(x),\sum_{i}r_{i}(\theta_{i})\rangle}\\
 & =\sum_{\theta}h(x)g(\theta)\mathrm{e}^{\langle s(x),r(\theta)\rangle}.
\end{align*}
\end{proof}
\begin{cor}
$p(x\mid\theta)$ and $p(\theta\mid x)$ are given by
\begin{align*}
p(x\mid\theta) & \propto\prod_{i}h_{i}(x)\mathrm{e}^{\langle s(x),\sum_{i}r_{i}(\theta_{i})\rangle}\\
p(\theta\mid x) & \propto\prod_{i}g_{i}(\theta_{i})\mathrm{e}^{\langle s(x),\sum_{i}r_{i}(\theta_{i})\rangle}.
\end{align*}

In particular, $p(\theta\mid x)$ factorizes and if $h_{i}\equiv1$
and every component of $s(x)$ only depends on one component of $x$,
$p(x\mid\theta)$ factorizes as well. In particular, this is true for the standard choice $s(x)=x$.
\end{cor}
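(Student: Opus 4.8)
The plan is to start from the joint distribution $p(x,\theta)\propto h(x)g(\theta)\mathrm{e}^{\langle s(x),r(\theta)\rangle}$ established in Theorem~\ref{thm:binary-pairwise-fact} and simply apply the definitions of the conditional densities, substituting the product forms $h(x)=\prod_i h_i(x)$, $g(\theta)=\prod_i g_i(\theta_i)$ and $r(\theta)=\sum_i r_i(\theta_i)$. Since $p(x\mid\theta)$ is obtained from $p(x,\theta)$ by fixing $\theta$ and renormalizing over $x$, every factor depending only on $\theta$ (namely $g(\theta)$ together with the constant of proportionality) drops into the normalizing constant, leaving $p(x\mid\theta)\propto h(x)\mathrm{e}^{\langle s(x),r(\theta)\rangle}=\prod_i h_i(x)\,\mathrm{e}^{\langle s(x),\sum_i r_i(\theta_i)\rangle}$. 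Symmetrically, fixing $x$ and renormalizing over $\theta$ makes $h(x)$ and the constant disappear, giving $p(\theta\mid x)\propto g(\theta)\mathrm{e}^{\langle s(x),r(\theta)\rangle}=\prod_i g_i(\theta_i)\,\mathrm{e}^{\langle s(x),\sum_i r_i(\theta_i)\rangle}$. This is essentially a restatement of the central Lemma applied to the aggregated link functions, so the bulk of the corollary is immediate.

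For the factorization claims I would argue as follows. Write $\langle s(x),\sum_i r_i(\theta_i)\rangle=\sum_i\langle s(x),r_i(\theta_i)\rangle$ by bilinearity of the form. Then $p(\theta\mid x)\propto\prod_i\bigl[g_i(\theta_i)\mathrm{e}^{\langle s(x),r_i(\theta_i)\rangle}\bigr]$, and since the $i$-th bracket depends only on $\theta_i$ (with $x$ held fixed), the product structure shows the $\theta_i$ are conditionally independent given $x$, with $p(\theta_i\mid x)\propto g_i(\theta_i)\mathrm{e}^{\langle s(x),r_i(\theta_i)\rangle}$; this proves the factorization of $p(\theta\mid x)$ unconditionally. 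For $p(x\mid\theta)$, the obstruction is that $s(x)$ generally couples several components of $x$, so $\mathrm{e}^{\langle s(x),r(\theta)\rangle}$ need not factor. Under the hypothesis $h_i\equiv 1$ we have $h(x)\equiv 1$, and if each component $s_k$ of $s$ depends on a single component $x_{j(k)}$ of $x$, then $\langle s(x),r(\theta)\rangle=\sum_k s_k(x_{j(k)})\,r(\theta)_k$ splits as a sum of terms each involving only one coordinate of $x$; grouping by coordinate gives $p(x\mid\theta)\propto\prod_j\exp\bigl(\sum_{k:\,j(k)=j}s_k(x_j)r(\theta)_k\bigr)$, a product of single-coordinate factors, hence $p(x\mid\theta)$ factorizes over the components of $x$. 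The standard choice $s(x)=x$ has $s_k(x)=x_k$ depending only on $x_k$, so it falls under this case.

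The only genuinely delicate point is bookkeeping around the bilinear form and the meaning of ``component of $s(x)$'': one must fix a basis of $V$ (or at least a coordinatization compatible with the bilinear form) so that $\langle s(x),r(\theta)\rangle$ literally expands as $\sum_k s(x)_k\, r(\theta)_k$ and the phrase ``every component of $s(x)$ only depends on one component of $x$'' becomes precise. I would add a sentence clarifying this convention. Everything else is routine: dropping $\theta$-only or $x$-only factors into normalizers, invoking bilinearity, and regrouping a sum over a product. I do not anticipate any real obstacle — the result is a direct corollary of Theorem~\ref{thm:binary-pairwise-fact} and the Lemma, with the factorization being a transparent consequence of the additive-in-$i$ structure of $r(\theta)$.
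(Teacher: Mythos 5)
Your proof is correct and follows exactly the route the paper intends: the corollary is stated without proof there, being an immediate consequence of the Theorem and the central Lemma via dropping $\theta$-only (resp.\ $x$-only) factors into the normalizer and using bilinearity to split $\langle s(x),\sum_i r_i(\theta_i)\rangle$. Your extra remark about fixing a coordinatization of $V$ to make ``component of $s(x)$'' precise is a reasonable clarification but not a substantive deviation.
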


\subsection{Binary pairwise MRFs}

\label{sec:binary-pairwise-MRFs}

We now want to turn to the special case of a pairwise binary MRF.
It turns out that finding a dual representation is equivalent to
finding an appropriate factorization of the probability table.
\begin{thm}
Let $P$ be proportional to the probability table of two binary random
variables $x_{1}$ and $x_{2}$. Assume we are given a factorization
$P=BC^{\intercal}$ with $B,C\in\mathbb{R}^{2\times2}$, where both
$B$ and $C$ have strictly positive entries. Let 
\begin{align*}
\alpha_{1} & =\log\frac{B_{2,1}}{B_{1,1}}\\
\alpha_{2} & =\log\frac{C_{2,1}}{C_{1,1}}\\
q & =\log\frac{B_{1,2}C_{1,2}}{B_{1,1}C_{1,1}}\\
\beta_{1} & =\log\frac{B_{2,2}B_{1,1}}{B_{1,2}B_{2,1}}\\
\beta_{2} & =\log\frac{C_{2,2}C_{1,1}}{C_{1,2}C_{2,1}}.
\end{align*}
Then
\[
p(x_{1},x_{2})\propto\sum_{\theta\in\{0,1\}}h(x)g(\theta)\mathrm{e}^{\langle x,r(\theta)\rangle}
\]
with
\begin{align*}
r(\theta) & =\theta\left(\begin{array}{c}
\beta_{1}\\
\beta_{2}
\end{array}\right).\\
h(x) & =\mathrm{e}^{\alpha_{1}x_{1}}\mathrm{e}^{\alpha_{2}x_{2}}\\
g(\theta) & \mathrm{=\mathrm{e}}^{q\theta}.
\end{align*}
 \end{thm}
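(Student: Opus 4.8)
The plan is to verify the claimed identity $p(x_1,x_2)\propto\sum_{\theta\in\{0,1\}}h(x)g(\theta)\mathrm{e}^{\langle x,r(\theta)\rangle}$ directly, by expanding the right-hand side into a $2\times 2$ table indexed by $(x_1,x_2)\in\{0,1\}^2$ and comparing it entrywise with $P=BC^{\intercal}$. First I would write out the sum over $\theta$ explicitly: for $\theta=0$ the exponential factor is $1$ and for $\theta=1$ it is $\mathrm{e}^{\beta_1 x_1+\beta_2 x_2}$, so the right-hand side becomes $\mathrm{e}^{\alpha_1 x_1+\alpha_2 x_2}\bigl(1+\mathrm{e}^{q}\mathrm{e}^{\beta_1 x_1+\beta_2 x_2}\bigr)$. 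Evaluating this at the four configurations gives entries proportional to $1$, to $\mathrm{e}^{\alpha_1}(1+\mathrm{e}^{q+\beta_1})$, to $\mathrm{e}^{\alpha_2}(1+\mathrm{e}^{q+\beta_2})$, and to $\mathrm{e}^{\alpha_1+\alpha_2}(1+\mathrm{e}^{q+\beta_1+\beta_2})$ respectively.

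Next I would substitute the definitions of $\alpha_1,\alpha_2,q,\beta_1,\beta_2$ in terms of the entries of $B$ and $C$ and simplify. For instance $\mathrm{e}^{q+\beta_1}=\frac{B_{1,2}C_{1,2}}{B_{1,1}C_{1,1}}\cdot\frac{B_{2,2}B_{1,1}}{B_{1,2}B_{2,1}}=\frac{B_{2,2}C_{1,2}}{B_{2,1}C_{1,1}}$, so $1+\mathrm{e}^{q+\beta_1}=\frac{B_{2,1}C_{1,1}+B_{2,2}C_{1,2}}{B_{2,1}C_{1,1}}$, and multiplying by $\mathrm{e}^{\alpha_1}=\frac{B_{2,1}}{B_{1,1}}$ yields $\frac{B_{2,1}C_{1,1}+B_{2,2}C_{1,2}}{B_{1,1}C_{1,1}}$. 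The numerator here is exactly $(BC^{\intercal})_{2,1}$, and the denominator is the common factor $B_{1,1}C_{1,1}$ which also appears when one normalizes the $\theta=0,x=0$ entry. Carrying out the analogous computation for the other two nontrivial entries shows each equals $(BC^{\intercal})_{x_1+1,x_2+1}/(B_{1,1}C_{1,1})$, while the $(0,0)$ entry is $1=(BC^{\intercal})_{1,1}/(B_{1,1}C_{1,1})$ after noting $(BC^{\intercal})_{1,1}=B_{1,1}C_{1,1}$. Hence the right-hand side equals $\frac{1}{B_{1,1}C_{1,1}}P$, which is proportional to $P$, as claimed.

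Alternatively — and this is probably the cleaner write-up — I would reverse-engineer the construction: since $h(x)$ only depends on $x$ and $g(\theta)\mathrm{e}^{\langle x,r(\theta)\rangle}$ summed over $\theta$ produces $G(x)=1+\mathrm{e}^{q+\beta_1 x_1+\beta_2 x_2}$, the decomposition amounts to writing each entry $P_{ij}$ as $h_i h_j' \cdot(\text{something}_{ij})$ where the ``something'' is itself a rank-one-plus-correction structure coming from $B$ and $C$. In this view the parameters $\alpha_1,\alpha_2$ absorb the first columns of $B$ and $C$, the parameter $q$ absorbs the second columns' contribution at the baseline configuration, and $\beta_1,\beta_2$ capture the multiplicative deviation $\frac{B_{2,2}B_{1,1}}{B_{1,2}B_{2,1}}$ of $B$ (and likewise $C$) from a rank-one table. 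One then checks that this reconstruction of $B$ and $C$ from $(\alpha_1,\alpha_2,q,\beta_1,\beta_2)$ is consistent, which reduces to the same set of four scalar identities.

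The computation is entirely elementary and the only real ``obstacle'' is bookkeeping: keeping track of which index of $B$ and $C$ goes with $x_1$ versus $x_2$, and making sure the overall proportionality constant $B_{1,1}C_{1,1}$ is handled consistently across all four table entries (the statement is only up to proportionality, so this constant is harmless but must not be forgotten when matching). Strict positivity of the entries of $B$ and $C$ is used exactly once, to guarantee that all the logarithms defining $\alpha_1,\alpha_2,q,\beta_1,\beta_2$ are well-defined and real; I would flag that explicitly. Note also that the theorem as stated only asserts the existence of the decomposition given a positive factorization $P=BC^{\intercal}$; it does not claim such a factorization always exists, so no argument about the factorizability of an arbitrary positive $2\times 2$ table is needed here.
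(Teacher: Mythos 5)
Your overall strategy---expanding the sum over $\theta$ and matching the four entries of the resulting $2\times2$ table against $BC^{\intercal}$ up to the common constant $B_{1,1}C_{1,1}$---is exactly the content of the paper's one-line proof, which writes $P=\sum_{i=1,2}\binom{B_{1,i}}{B_{2,i}}\,(C_{1,i}\;\;C_{2,i})$ and identifies the summation index $i$ with the dual variable $\theta$; so the route is the same, only run in the opposite direction (you reconstruct $P$ from $h,g,r$ rather than reading $h,g,r$ off the column decomposition).

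There is, however, a concrete error in your handling of the $(x_1,x_2)=(0,0)$ configuration. The right-hand side evaluated there is $1+\mathrm{e}^{q}$, not $1$: the $\theta=1$ term contributes $\mathrm{e}^{q}$ even at the baseline configuration. Your patch ``$(BC^{\intercal})_{1,1}=B_{1,1}C_{1,1}$'' is false, since $(BC^{\intercal})_{1,1}=B_{1,1}C_{1,1}+B_{1,2}C_{1,2}$ and the second summand is strictly positive by hypothesis. The two slips happen to compensate for the intended conclusion: the correct value is $1+\mathrm{e}^{q}=\frac{B_{1,1}C_{1,1}+B_{1,2}C_{1,2}}{B_{1,1}C_{1,1}}=\frac{(BC^{\intercal})_{1,1}}{B_{1,1}C_{1,1}}$, which matches the $(1,1)$ entry of $P$ with the same proportionality constant $B_{1,1}C_{1,1}$ as your other three entries, so the proof is sound once this entry is redone honestly. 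Your computations for the remaining three configurations (e.g. $\mathrm{e}^{\alpha_1}(1+\mathrm{e}^{q+\beta_1})=\frac{B_{2,1}C_{1,1}+B_{2,2}C_{1,2}}{B_{1,1}C_{1,1}}$) are correct, as are your remarks that positivity is needed only for the logarithms to be defined and that existence of the factorization $P=BC^{\intercal}$ is assumed rather than proved here (the paper establishes it separately in the subsequent lemmas).
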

\begin{proof}
This follows from
\[
P=\sum_{i=1,2}\left(\begin{array}{c}
B_{1,i}\\
B_{2,i}
\end{array}\right)\left(\begin{array}{cc}
C_{1,i} & C_{2,i}\end{array}\right)
\]
after a simple calculation.
\end{proof}
We will now show how to find such a factorization:
\begin{lem}
If $P$ is symmetric and $\det P\geq0$, $P$ can be factored in the
form $P=B\,B^{\intercal}$, where
\begin{equation*}
B=\left(\begin{array}{cc}
\sqrt{p_{11}}\cos(\varphi) & \sqrt{p_{11}}\sin(\varphi)\\
\sqrt{p_{22}}\sin(\varphi) & \sqrt{p_{22}}\cos(\varphi)
\end{array}\right)
\;\text{with}\;
\varphi=\frac{\pi}{4}-\frac{1}{2}\arccos\left(\frac{p_{12}}{\sqrt{p_{11}p_{22}}}\right).
\end{equation*}
\end{lem}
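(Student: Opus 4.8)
The plan is to verify directly that the matrix $B$ defined in the statement satisfies $BB^{\intercal}=P$, and then to check that the stated formula for $\varphi$ makes the off-diagonal entries come out right. First I would compute the three distinct entries of $BB^{\intercal}$. Writing $B$ with rows $(\sqrt{p_{11}}\cos\varphi,\ \sqrt{p_{11}}\sin\varphi)$ and $(\sqrt{p_{22}}\sin\varphi,\ \sqrt{p_{22}}\cos\varphi)$, the $(1,1)$ entry of $BB^{\intercal}$ is $p_{11}(\cos^2\varphi+\sin^2\varphi)=p_{11}$, and similarly the $(2,2)$ entry is $p_{22}$; both are automatic from the Pythagorean identity, independent of $\varphi$. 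The $(1,2)$ entry is $\sqrt{p_{11}p_{22}}(\cos\varphi\sin\varphi+\sin\varphi\cos\varphi)=\sqrt{p_{11}p_{22}}\sin(2\varphi)$, using the double-angle identity. So the whole claim reduces to showing $\sqrt{p_{11}p_{22}}\sin(2\varphi)=p_{12}$, i.e. $\sin(2\varphi)=p_{12}/\sqrt{p_{11}p_{22}}$.

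Next I would plug in the prescribed $\varphi=\tfrac{\pi}{4}-\tfrac12\arccos\!\big(p_{12}/\sqrt{p_{11}p_{22}}\big)$. Then $2\varphi=\tfrac{\pi}{2}-\arccos\!\big(p_{12}/\sqrt{p_{11}p_{22}}\big)$, so $\sin(2\varphi)=\sin\!\big(\tfrac{\pi}{2}-\arccos(\cdot)\big)=\cos\!\big(\arccos(p_{12}/\sqrt{p_{11}p_{22}})\big)=p_{12}/\sqrt{p_{11}p_{22}}$, as required. This is the heart of the computation and it goes through cleanly once the double-angle reduction is in place.

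The one genuine issue to address is well-definedness: the formula only makes sense if the argument of $\arccos$ lies in $[-1,1]$, i.e. if $|p_{12}|\le\sqrt{p_{11}p_{22}}$, equivalently $p_{11}p_{22}-p_{12}^2\ge0$. Since $P$ is symmetric with $p_{11}p_{22}-p_{12}^2=\det P\ge0$ by hypothesis, this holds; one should also note $p_{11},p_{22}\ge0$ (which follows since $P$ is a $2\times2$ probability table up to scaling, or from positive-semidefiniteness together with $\det P\ge0$), so the square roots are real. I expect this feasibility check — confirming that the hypotheses $\det P\ge0$ and symmetry are exactly what is needed for $\varphi$ to be a real number — to be the only subtle point; the algebra itself is routine. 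It would also be worth remarking, for the purposes of the preceding theorem, that $B$ has strictly positive entries precisely when $\varphi\in(0,\pi/2)$, which happens when $p_{11},p_{22}>0$ and $p_{12}/\sqrt{p_{11}p_{22}}\in(-1,1)$, i.e. when $\det P>0$; this connects the lemma back to the strict-positivity requirement in the factorization theorem.
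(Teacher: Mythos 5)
Your proof is correct and takes essentially the same route as the paper's: a direct entrywise computation of $BB^{\intercal}$ whose diagonal entries are automatically $p_{11}$ and $p_{22}$, with the off-diagonal reducing to $\sqrt{p_{11}p_{22}}\sin(2\varphi)$ (the paper writes this as $\cos\left(\tfrac{\pi}{2}-2\varphi\right)\sqrt{p_{11}p_{22}}$), which the choice of $\varphi$ turns into $p_{12}$, plus the same well-definedness check that $\det P\geq 0$ keeps the $\arccos$ argument in $[-1,1]$. Your closing remark on strict positivity of the entries matches the paper's observation that $\varphi$ lies in a range where $\sin\varphi,\cos\varphi>0$.
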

\begin{proof}
$B$ is well defined because $0\leq\det$$P=p_{11}p_{22}-p_{12}^{2}$
and positive because $\varphi\in\left(0,\frac{\pi}{4}\right)$. Now
let
\[
B=\left(\begin{array}{c}
\tilde{b}_{1}^{\intercal}\\
\tilde{b}_{2}^{\intercal}
\end{array}\right).
\]
We have
\[
B\,B^{\intercal}=\left(\begin{array}{cc}
\|\tilde{b}_{1}\|^{2} & \langle\tilde{b}_{1},\tilde{b}_{2}\rangle\\
\langle\tilde{b}_{1},\tilde{b}_{2}\rangle & \|\tilde{b}_{2}\|^{2}
\end{array}\right).
\]
Due to trigonometric considerations
\begin{equation*}
BB^{\intercal}=\left(\begin{array}{cc}
p_{11} & c\sqrt{p_{11}p_{22}}\\
c\sqrt{p_{11}p_{22}} & p_{22}
\end{array}\right)
\;\;\text{with}\;\;
c=\cos\left(\frac{\pi}{2}-2\varphi\right).
\end{equation*}
This shows $B\,B^{\intercal}=P$ as required.\end{proof}
\begin{rem}
For $\varphi=\frac{\pi}{4}-\frac{1}{2}\arccos(a)$, we have
\begin{align*}
\cos(\varphi) & =\frac{1}{2}(\sqrt{1+a}+\sqrt{1-a})\\
\sin(\varphi) & =\frac{1}{2}(\sqrt{1+a}-\sqrt{1-a}).
\end{align*}
\end{rem}
\begin{lem}
For any $P$, then
\[
\left(\begin{array}{cc}
p_{12}^{-1} & 0\\
0 & p_{21}^{-1}
\end{array}\right)P
\]

is symmetric.
\end{lem}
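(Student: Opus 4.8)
The plan is to prove the claim by an entirely explicit computation: no structural idea is needed beyond multiplying a $2\times2$ diagonal matrix into $P$. Write $P=\begin{pmatrix} p_{11} & p_{12}\\ p_{21} & p_{22}\end{pmatrix}$ and let $D=\begin{pmatrix} p_{12}^{-1} & 0\\ 0 & p_{21}^{-1}\end{pmatrix}$, which is well defined because throughout this section all entries of $P$ are assumed strictly positive, so in particular $p_{12}\neq0$ and $p_{21}\neq0$.

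First I would form the product $DP$ row by row: the first row of $DP$ is $p_{12}^{-1}$ times the first row of $P$, namely $(p_{11}/p_{12},\,1)$, and the second row of $DP$ is $p_{21}^{-1}$ times the second row of $P$, namely $(1,\,p_{22}/p_{21})$. Hence
\[
DP=\begin{pmatrix} p_{11}/p_{12} & 1\\ 1 & p_{22}/p_{21}\end{pmatrix},
\]
and since the two off-diagonal entries are both equal to $1$, the matrix $DP$ is symmetric, which is exactly the assertion.

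Because the computation collapses to a single line, there is essentially no obstacle here; the only point worth flagging is the implicit positivity hypothesis that makes $p_{12}^{-1}$ and $p_{21}^{-1}$ meaningful, which is harmless since all of Section~\ref{sec:binary-pairwise-MRFs} works with strictly positive factors. In terms of how this lemma feeds the construction, its role is to turn an arbitrary (positive) $P$ into a \emph{symmetric} matrix $DP$ to which the preceding $B\,B^{\intercal}$-factorization lemma applies whenever $\det(DP)=\det D\,\det P=\det P/(p_{12}p_{21})\geq0$; from $DP=B\,B^{\intercal}$ one then recovers $P=(D^{-1}B)\,B^{\intercal}$, a factorization of the form $P=B'C^{\intercal}$ with strictly positive $B'=D^{-1}B$ and $C=B$, which is precisely the input required by the factorization theorem stated earlier.
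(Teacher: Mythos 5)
Your computation is correct and is exactly the one-line verification the paper has in mind (it omits the proof as immediate): the product has both off-diagonal entries equal to $1$, hence is symmetric, with positivity of $p_{12}, p_{21}$ guaranteeing the inverses exist. Your added remarks on how the symmetrized matrix feeds into the $B\,B^{\intercal}$ lemma and the factorization theorem are consistent with the paper's intended use.
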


\begin{lem}
If $\det P_{i}<0$ , then
\[
\left(\begin{array}{cc}
0 & 1\\
1 & 0
\end{array}\right)P
\]

has positive determinant.
\end{lem}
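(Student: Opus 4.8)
The plan is to reduce the statement to the multiplicativity of the determinant. Write $J=\left(\begin{smallmatrix}0&1\\1&0\end{smallmatrix}\right)$ for the row-swap permutation matrix, so that $JP$ is obtained from $P$ by exchanging its two rows. I would first record the trivial facts that $JP$ again has strictly positive entries whenever $P$ does (so that the conclusion is compatible with the hypotheses needed in the earlier lemmas) and that $\det J=-1$. The product rule for determinants then gives $\det(JP)=\det(J)\,\det(P)=-\det P$, and since $\det P<0$ by assumption we conclude $\det(JP)=-\det P>0$, as claimed.

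If one prefers not to cite the product rule, the identity $\det(JP)=-\det P$ also drops out of a one-line direct computation: with $P=\left(\begin{smallmatrix}p_{11}&p_{12}\\p_{21}&p_{22}\end{smallmatrix}\right)$ one has $JP=\left(\begin{smallmatrix}p_{21}&p_{22}\\p_{11}&p_{12}\end{smallmatrix}\right)$, hence $\det(JP)=p_{21}p_{12}-p_{22}p_{11}=-\det P$.

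Honestly, there is no real obstacle inside the proof itself; the only content is recognizing why the lemma is worth stating. It is the last missing piece in the factorization pipeline for an arbitrary pairwise binary table $P$ with strictly positive entries: if $\det P<0$ one first replaces $P$ by $JP$ (a relabeling of one of the binary variables, which keeps the entries positive and, by this lemma, turns the determinant positive), then rescales the rows by $\mathrm{diag}(p_{12}^{-1},p_{21}^{-1})$ to symmetrize $P$ via the preceding lemma (this rescaling multiplies the determinant by the positive number $(p_{12}p_{21})^{-1}$, so positivity of the determinant is preserved), and finally applies the symmetric factorization lemma to write the symmetrized table as $BB^{\intercal}$ with $B$ positive. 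Undoing the rescaling and the swap then produces a factorization $P=BC^{\intercal}$ with $B,C$ having strictly positive entries, which is exactly the hypothesis of the factorization theorem for binary tables above. In writing this up I would therefore state the determinant computation as in the first paragraph and add a short remark spelling out this chain, so that the reader sees how the four lemmas combine into a constructive dualization procedure.
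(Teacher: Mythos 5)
Your proof is correct: the identity $\det(JP)=\det(J)\det(P)=-\det P$ (or the equivalent one-line direct computation) is exactly the argument intended here, and the paper itself leaves this lemma unproved as immediate. Your added remark on how the swap fits into the overall factorization pipeline matches the paper's summary paragraph following the lemmas.
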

In summary, we have found a strictly positive
factorization of any strictly positive $2\times2$ matrix. Together with Theorem \ref{thm:binary-pairwise-fact} this
yields a dual representation for every binary pairwise MRF.

\subsection{General discrete MRFs}

When the variables are allowed to have multiple states, we can convert
any discrete pairwise MRF into a binary MRF using $0-1$-encoding and additional
hard-constraints that ensure that exactly one binary variable belonging
to a random variable in the original MRF has value $1$. All inference
algorithms in this paper therefore generalize to this situation.

Dualizing a $n\times m$ factor in this way introduces $nm$ auxiliary
binary random variables to the model. Note however, that no new random
variables need to be introduced for $1$-entries in the factor. For
example, for a Potts-factor of order $n$, only $n$ auxiliary binary
random variables have to be introduced per factor.

Arbitrary discrete MRFs with higher-order factors work as well, as long as we can find an
appropriate positive tensor factorization of the probability table.
Moreover, it is also possible to perform inference approximately by
fitting a mixture of Bernoulli / mixture of Dirichlet distributions
to the factors using expectation maximization.

\subsection{Swendsen-Wang and local constraints}

As it turns out, the Swendsen-Wang algorithm can be seen as a degenerate
special case of our formalism for a particular choice of $s(x)$.
Moreover, more general local constraint models can potentially be
derived from this formalism.

More explicitly, for the Ising model let 
\begin{equation*}
s(x):=(-I(x_{e_{1}}=x_{e_{2}}))_{e\in E}\label{eq:local-constraint}
\end{equation*}
where $E$ denotes the set of edges and
\[
I(x_{e_{1}}=x_{e_{2}})=\begin{cases}
0 & \text{if }x_{e_{1}}=x_{e_{2}}\\
\infty & \text{else}
\end{cases}.
\]
The Ising factor of the form
\begin{equation*}
P_{i}\propto\left(\begin{array}{cc}
1 & \mathrm{e}^{-w_{i}}\\
\mathrm{e}^{-w_{i}} & 1
\end{array}\right)=\left(\begin{array}{cc}
\mathrm{e}^{-w_{i}} & \mathrm{e}^{-w_{i}}\\
\mathrm{e}^{-w_{i}} & \mathrm{e}^{-w_{i}}
\end{array}\right)
+\left(\begin{array}{cc}
1-\mathrm{e}^{-w_{i}} & 0\\
0 & 1-\mathrm{e}^{-w_{i}}
\end{array}\right)
\end{equation*}
can then be decomposed as
\[
P_{i}(x_{e_{1}},x_{e_{2}})=\sum_{\theta_{i}\in\{0,1\}}g(\theta_{i})\mathrm{e}^{-\theta_{i}I(x_{e_{1}}=x_{e_{2}})}.
\]
where
\begin{align*}
g(0) & =\mathrm{e}^{-w_{i}}\\
g(1) & =1-\mathrm{e}^{-w_{i}}.
\end{align*}
The primal dual sampling algorithm then proceeds as follows:
\begin{align*}
p(\theta_{i}\mid x) & \propto g(\theta_{i})\mathrm{e}^{-\theta_{i}I(x_{e_{1}}=x_{e_{2}})}\\
p(x\mid\theta_{i}) & \propto\mathrm{e}^{-\sum_{i}\theta_{i}I(x_{e_{1}}=x_{e_{2}})},\nonumber 
\end{align*}
which are just the update rules for the Swendsen-Wang algorithm.

The partial Swendsen-Wang method by Higdon \cite{higdon1998auxiliary}
can be regarded as a decomposition of the form
\[
P_{i}\propto\left(\begin{array}{cc}
1-\alpha & \mathrm{e}^{-w_{i}}\\
\mathrm{e}^{-w_{i}} & 1-\alpha
\end{array}\right)+\left(\begin{array}{cc}
\alpha & 0\\
0 & \alpha
\end{array}\right).
\]
This leads to the method described by Higdon, where we are left with
sampling from coarser Ising-model. By applying a factorization as
in Section \ref{sec:binary-pairwise-MRFs} to the first term enables
us to circumvent this step, so that all clusters can be sampled independently
(the latent variables $\theta$ then have $3$ different states).

Similarly, the generalization of the Swendson-Wang algorithm in \cite{barbu2005generalizing}
can be regarded as a multiplicative decomposition of the form
\[
P_{i}\propto\left(\begin{array}{cc}
\mathrm{e}^{w_{i}} & 1\\
1 & \mathrm{e}^{w_{i}}
\end{array}\right)\star\tilde{P}_{i},
\]
where we used the $\star$-operator to indicate componentwise multiplication. Applying the
decomposition above to the first factor, yields (a variant of) the
method in \cite{barbu2005generalizing}. We can use our method to
further decompose $\tilde{P}_{i}$, allowing to update all clusters
in parallel.

\section{Inference}

\subsection{Sampling}

\label{sec:inference-sampling}

Having a primal-dual representation of $p(x)$ of the form
\[
p(x,\theta)\propto h(x)g(\theta)\mathrm{e}^{\langle s(x),r(\theta)\rangle}
\]
we can sample from $p(x,\theta)$ (and thereby from $p(x)$) by blockwise
Gibbs-sampling, i.e.
\begin{align*}
x^{(t+1)} & \sim p(x\mid\theta^{(t)})\propto h(x)\mathrm{e}^{\langle s(x),r(\theta^{(t)})\rangle}\\
\theta^{(t+1)} & \sim p(\theta\mid x^{(t+1)})\propto g(\theta)\mathrm{e}^{\langle s(x^{(t+1)}),r(\theta)\rangle}.
\end{align*}
For discrete pairwise MRFs with the dual representations as described
above, both distributions factor, so that sampling can be done in
parallel, e.g. on the GPU. Effectively, we have converted our model
to a restricted Boltzmann machine. 

\subsection{Estimation of the log-partition-function}
The logarithm normalization constant of an unnormalized probability distribution, the so-called \emph{log-partition-function}, 
is important for model-selection and related tasks. In this section, we provide a simple estimator for this quantity that can be used for any
dual pair of random variables.

The following defines an unbiased estimator for the partition function $Z$:
\[
  V(x, \theta) = \frac{\tilde p(x)\tilde p(\theta)}{\tilde p(x, \theta)} = Z \frac{p(x)p(\theta)}{p(x, \theta)},
\]
where $\tilde p(x)$ and $\tilde p(\theta)$ are the unnormalized probability distributions.
Indeed, we have
\[
 \E \left[ V(x, \theta) \right] = Z \int \frac{ p(x) p(\theta)}{ p(x, \theta)} p(x, \theta) \mathrm d x \mathrm d \theta
 = Z.
\]
Written in terms of $G$ and $H$, $V(x, \theta)$ can be written as
\[
 V(x, \theta) = G(x)H(\theta)\mathrm e^{-\langle s(x), r(\theta) \rangle}.
\]
Note that
\[
 \E \left[ -\log V(x, \theta) \right] - \left[ -\log \E \left[ V(x, \theta) \right] \right]
 = \mathbb I(x, \theta),
\]
where $\mathbb I(x, \theta)$ is the mutual information between $x$ and $\theta$. This is a measure 
for the uncertainty of $V(x, \theta)$, as it can be interpeted as a generalized variance for the convex
function $x \mapsto -\log(x)$. 

This also implies that the expectation of
\[
 \log V(x, \theta)
\]
yields a lower bound to the log-partition function. In practice, $V(x, \theta)$ has too much variance to be useful. Therefore,
we estimate the expectation of $\log V(x, \theta)$, which yields a lower bound to the log-partition function.

\begin{exa}\label{exa:log-partition-SW}
For the Swendsen-Wang-duality for the Ising-model, we have $h(x) = 1$, $G(x) = \prod_e P_e(x_{e_1}, x_{e_2})$ and therefore
\[
 H(\theta) = \sum_x \prod_e \mathrm e^{-\theta_e I(x_{e_1}= x_{e_2})} = 2^{C(\theta)},
\]
where $C(\theta)$ is the number of clusters defined by $\theta$. Therefore
\[
 \log V(x, \theta) = \log 2 \cdot C(\theta) + \log \tilde p(x),
\]
where $\tilde p(x)$ is the unnormalized distribution of the Ising-model.

\end{exa}

A natural question to ask is how this estimator relates to the estimate obtained by running naive mean-field on the primal distribution $p(x)$ only. The
following negative result shows that in most cases the estimate obtained by mean-field approximations is preferable:
\begin{lem}
We have
\begin{equation}\label{eq:information-inequality}
  \mathbb I(x, \theta) = \E_\theta \KL(p(x \mid \theta), p(x)) \geq \min_\xi \KL (p(x \mid \xi), p(x)),
\end{equation}
where we define $p(x \mid \xi) \propto h(x)\exp(\langle s(x), \xi \rangle)$.
\end{lem}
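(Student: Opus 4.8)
The plan is to prove the chain of (in)equalities in \eqref{eq:information-inequality} from left to right. First I would establish the identity $\mathbb I(x,\theta) = \E_\theta \KL(p(x\mid\theta),p(x))$. This is just the standard decomposition of mutual information as an expected KL divergence: $\mathbb I(x,\theta) = \int p(x,\theta)\log\frac{p(x,\theta)}{p(x)p(\theta)} = \int p(\theta)\int p(x\mid\theta)\log\frac{p(x\mid\theta)}{p(x)} = \E_\theta \KL(p(x\mid\theta),p(x))$, which requires no special structure of the model.

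The second, key step is the inequality $\E_\theta \KL(p(x\mid\theta),p(x)) \geq \min_\xi \KL(p(x\mid\xi),p(x))$. The crucial observation is that, by the Corollary (or directly from the definition of duality), $p(x\mid\theta) \propto h(x)\exp(\langle s(x), r(\theta)\rangle)$, so $p(x\mid\theta)$ is of the form $p(x\mid\xi)$ with $\xi = r(\theta)$. Hence for every fixed $\theta$ we have $\KL(p(x\mid\theta),p(x)) = \KL(p(x\mid r(\theta)),p(x)) \geq \min_\xi \KL(p(x\mid\xi),p(x))$, where the minimum is over all $\xi$ in the relevant parameter space $V$ (the target space of the link functions). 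Taking the expectation over $\theta$ of a quantity that is pointwise bounded below by the constant $\min_\xi \KL(p(x\mid\xi),p(x))$ preserves the bound, giving the claimed inequality.

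I expect the main (minor) obstacle to be a bookkeeping/well-definedness point rather than a mathematical difficulty: one must be careful that the parameter $\xi$ ranges over a set large enough to include all values $r(\theta)$ that occur with positive probability under $p(\theta)$, and that $p(x\mid\xi) \propto h(x)\exp(\langle s(x),\xi\rangle)$ is a genuine normalizable distribution for such $\xi$ (which it is, since it coincides with an honest conditional $p(x\mid\theta)$ for $\xi = r(\theta)$, and these are the only values needed for the inequality). Assuming $\xi$ ranges over at least $\{r(\theta) : \theta \in \Omega_\theta\}$, no issue arises. One should also note the interpretive payoff, worth a sentence: since the right-hand side is exactly the value of the naive mean-field-style objective obtained by optimizing the exponential-family tilt $\xi$ directly on $p(x)$, and since (as observed in the text) $\mathbb I(x,\theta)$ controls the looseness of the $\log V(x,\theta)$ lower bound, this shows the dual estimator's bound is never tighter than — and typically strictly looser than — the one from optimizing $\xi$ directly.
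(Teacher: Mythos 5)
Your proposal is correct and matches the paper's own argument: the equality is the standard decomposition of mutual information as an expected KL divergence, and the inequality follows because each $p(x\mid\theta)$ lies in the family $\{p(x\mid\xi)\}$ via $\xi = r(\theta)$, so the expectation over $\theta$ dominates the minimum over $\xi$. Your added remarks on well-definedness of $\xi \mapsto p(x\mid\xi)$ simply spell out what the paper's one-line proof leaves implicit.
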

\begin{proof}
 The equality in \eqref{eq:information-inequality} can be obtained by a straightforward calculation. The inequality is a simple application of the fact that the expectation over $\theta$ is
 always bigger than the minimum over $\theta$.
\end{proof}
Note, however, that it is not always straightforward to find $\xi$ that minimizes $\KL (p(x \mid \xi), p(x))$. This is for example the case for the Swendsen-Wang-representation from Example \ref{exa:log-partition-SW}.

\subsection{MAP- and mean-field inference}
The concept of probabilistic duality that we introduced in Section \ref{sec:probabilistic-duality} is also useful to derive parallel MAP- and mean-field inference algorithms.

By applying EM to $p(x,\theta)$ we can also compute local MAP-assignments
to $p(x)$ in parallel. The updates read
\begin{align*}
x^{(t+1)} & =\argmax_{x}h(x)\mathrm{e}^{\langle s(x),\xi^{(t)}\rangle}\\
\xi^{(t+1)} & =\E\left(r(\theta)\mid x^{(t+1)}\right).
\end{align*}

Similar, we can compute mean-field assignments to $p(x,\theta)$ using the updates
\begin{align*}
\eta^{(t+1)} & =\E\left(s(x)\mid\xi^{(t)}\right)\\
\xi^{(t+1)} & =\E\left(r(\theta)\mid\eta^{(t+1)}\right),
\end{align*}
where the expectations are taken over the distributions
\begin{align*}
p(x\mid\xi) & \propto h(x)\mathrm{e}^{\langle s(x),\xi\rangle}\\
p(\theta\mid\eta) & \propto g(\theta)\mathrm{e}^{\langle\eta,r(\theta)\rangle}.
\end{align*}
Note that these updates have the advantage over ICM and standard naive
mean field that they can again run in parallel and still have convergence
guarantees.

Using this algorithm, we can show that we minimize an upper bound to the true mean-field objective $\KL(p(x\mid \xi), p(x))$:
\begin{lem}\label{lem:pdmeanfield-inequality}
 We have
 \begin{equation}
  \min_\eta  \KL \left(p(x\mid \xi)p(\theta, \eta), p(x, \theta) \right) \geq \KL(p(x \mid \xi), p(x)).
 \end{equation}
\end{lem}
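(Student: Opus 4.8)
The plan is to establish the inequality for \emph{each} fixed $\eta$ — so that the minimum over $\eta$ is then immediate — by exploiting the fact that the Kullback--Leibler divergence is non-increasing under marginalization. Write $q_{\xi,\eta}(x,\theta) := p(x\mid\xi)\,p(\theta\mid\eta)$ for the variational joint, where $p(\theta\mid\eta)\propto g(\theta)\mathrm{e}^{\langle\eta,r(\theta)\rangle}$. The key observations are that the $x$-marginal of $q_{\xi,\eta}$ is exactly $p(x\mid\xi)$, since $p(\theta\mid\eta)$ sums (integrates) to $1$ over $\theta$ and does not depend on $x$, and that, by the lemma in Section~\ref{sec:probabilistic-duality} together with Theorem~\ref{thm:binary-pairwise-fact}, the $x$-marginal of the target joint $p(x,\theta)$ is $p(x)$.

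First I would invoke the chain rule for relative entropy,
\[
\KL\bigl(q_{\xi,\eta}(x,\theta),\,p(x,\theta)\bigr) = \KL\bigl(q_{\xi,\eta}(x),\,p(x)\bigr) + \E_{x\sim q_{\xi,\eta}}\bigl[\KL\bigl(q_{\xi,\eta}(\theta\mid x),\,p(\theta\mid x)\bigr)\bigr],
\]
where $q_{\xi,\eta}(x)$ and $p(x)$ denote the respective $x$-marginals. Since the second term is a nonnegative average of KL divergences, dropping it gives
\[
\KL\bigl(q_{\xi,\eta}(x,\theta),\,p(x,\theta)\bigr) \;\geq\; \KL\bigl(q_{\xi,\eta}(x),\,p(x)\bigr) \;=\; \KL\bigl(p(x\mid\xi),\,p(x)\bigr),
\]
using the two marginals identified above. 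The right-hand side is independent of $\eta$, so taking the minimum over $\eta$ on the left yields the claimed bound.

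I do not expect a genuine obstacle here: the argument is essentially bookkeeping. The only points that need care are (i) reading the symbol $p(\theta,\eta)$ in the statement as the parametrized family $p(\theta\mid\eta)$, (ii) checking that the $x$-marginal of $p(x\mid\xi)\,p(\theta\mid\eta)$ is $p(x\mid\xi)$ for \emph{every} $\eta$, so that the bound is uniform in $\eta$, and (iii) citing the earlier duality lemma to identify $p(x)$ as the $x$-marginal of $p(x,\theta)$. One could equivalently phrase the whole proof as a one-line appeal to the data-processing inequality applied to the deterministic map $(x,\theta)\mapsto x$, but the chain-rule version has the advantage of exhibiting the slack explicitly, namely $\E_{x\sim q_{\xi,\eta}}[\KL(q_{\xi,\eta}(\theta\mid x),p(\theta\mid x))]$, which quantifies how much looser the primal-dual mean-field objective is than the true one.
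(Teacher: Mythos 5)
Your proof is correct and takes essentially the same route as the paper: the paper establishes the bound for each fixed $\eta$ by writing the joint KL as a double expectation and applying Jensen's inequality to the inner expectation over $\theta\mid\eta$, which is precisely the standard proof of the marginalization-monotonicity (chain-rule/data-processing) fact you invoke. The only cosmetic difference is that your chain-rule phrasing exhibits the slack $\E_{x\sim q}\bigl[\KL(q(\theta\mid x),p(\theta\mid x))\bigr]$ explicitly, whereas the paper instead computes $\E_{\theta\mid\eta}\bigl[p(x,\theta)/(p(x\mid\xi)p(\theta\mid\eta))\bigr]=p(x)/p(x\mid\xi)$ directly inside the logarithm.
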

\begin{proof}
 We have
\begin{multline*}
\KL (p(x\mid \xi)p(\theta, \eta), p(x, \theta))
= \E_{x\mid \xi} \E_{\theta \mid \eta} \left( -\log \frac{p(x, \theta)}{p(x \mid \xi) p(\theta \mid \eta)} \right) \\
\geq \E_{x\mid \xi}  \left( -\log \E_{\theta \mid \eta} \frac{p(x, \theta)}{p(x \mid \xi) p(\theta \mid \eta)} \right) 
= \E_{x\mid \xi}  \left( -\log \frac{p(x)}{p(x \mid \xi)} \right)  \\
= \KL( p(x \mid \xi), p(x)).
\end{multline*}

\end{proof}
Lemma \ref{lem:pdmeanfield-inequality} implies that traditional mean-field updates are still preferable to the ones from our method. Indeed, in 
practice we found that our method can lead to poor approximations in presence of many factors. However, it is still possible to first run our fast parallel algorithm and
then fine-tune the result using traditional mean-field updates.  

\subsection{Blocking}

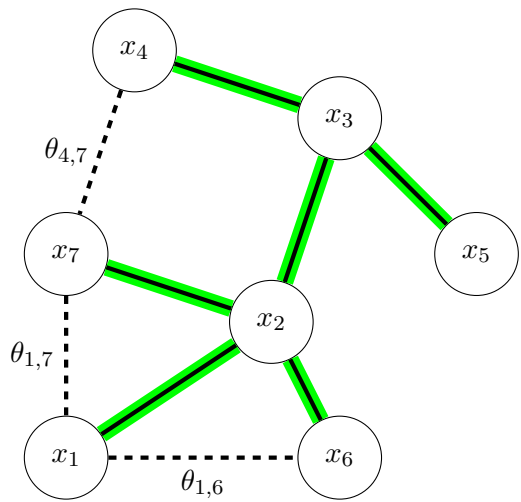
\begin{figure}
\noindent \begin{centering}
 \begin{tikzpicture}[scale=0.9]
\tikzstyle{highlighted}=[preaction={
draw,green,-,
double=green,
double distance=2\pgflinewidth,
}]

\tikzstyle{edge}=[ultra thick]
\tikzstyle{vertex}=[circle,draw,minimum width=1.1cm]

 \node[vertex] (x1)  at (-2, -2) {$x_1$} ;
 \node[vertex] (x2) at (1, 0) {$x_2$} ;
 \node[vertex] (x3) at (2, 3) {$x_3$} ;
 \node[vertex] (x4) at (-1, 4) {$x_4$} ;
 \node[vertex] (x5) at (4, 1) {$x_5$} ;
 \node[vertex] (x6) at (2, -2) {$x_6$} ;
 \node[vertex] (x7) at (-2, 1) {$x_7$} ;

\draw(x1) edge [edge, highlighted] (x2);
\draw(x2) edge [edge, highlighted] (x7);
\draw(x2) edge [edge, highlighted] (x3);
\draw(x2) edge [edge, highlighted] (x6);
\draw(x3) edge [edge, highlighted] (x5);
\draw(x3) edge [edge, highlighted] (x4);
\draw(x4) edge [edge, dashed] node[left] {$\theta_{4,7}$} (x7) ;
\draw(x7) edge [edge, dashed] node[left]{$\theta_{1, 7}$}(x1);
\draw(x1) edge [edge, dashed] node[below]{$\theta_{1, 6}$}(x6);

\end{tikzpicture}
\par\end{centering}

\caption{Example of blocking using a tree as a subgraph. Only the dual variables
on the dashed edges are used. Sampling then alternatingly samples
all $\theta_{e}$ variables and all the $x$ variables on the tree.
Similarly, the EM algorithm alternatingly maximizes all $x$ over
the tree using max-product-belief-propagation and takes conditional
expectations over the $\theta_{e}$ variables to update the unary
potentials. Our tree-mean-field algorithm does the same, except that
it uses sum-product belief propagation.}\label{fig:blocking}
\end{figure}

Gibbs sampling as described in Section \ref{sec:inference-sampling}
can still be prohibitively slow in presence of strongly correlated
random variables. Similarly, EM and mean-field updates tend to get
stuck in local optima in this situation. This problem also occurs
for standard Gibbs sampling, ICM and naive mean-field updates. A common
way out is to introduce blocking, e.g. as in \cite{gonzalez2011parallel}
for Gibbs sampling.

Unfortunately, blocking is only possible with respect to induced subgraphs
for traditional algorithms. As it turns out, our primal-dual decomposition
allows to perform blocking with respect to arbitrary subgraphs. This
is illustrated in Figure \ref{fig:blocking}. The idea is to decompose
the dual variables $\theta$ into two subsets $\theta_{0}$ and $\theta_{1}$,
so that $p(x,\theta_{0}\mid\theta_{1}$) is tractable. This is the
case, when $p(x\mid\theta_{1})$ is tractable, because then
\[
p(x,\theta_{0}\mid\theta_{1})=p(\theta_{0}\mid x)p(x\mid\theta_{1}).
\]

Note that $p(x\mid\theta_{1})$ is tractable, if the graph obtained
by removing all the factors belonging to $\theta_{1}$ has low tree-width. 

For blocked Gibbs sampling, we sample in each step $p(x\mid\theta_{1}^{(t)})$
and then $p(\theta_{1}^{(t)}\mid x^{(t+1)})$. As a variation of
this process, we can vary the decomposition of $\theta$ into $\theta_{0}$
and $\theta_{1}$ in each step. 

When we perform max-product-belief propagation for $x$ and take expectations
for $\theta_{1}^{(t)}$ we obtain a new inference algorithm for MAP-inference.
Note that in each step, we maximize over \emph{all} $x$ variables
at once. Similarly, we obtain a probabilistic inference algorithm
by changing the max-product-belief propagation step for $x$ by sum-product-belief
propagation.

Note also that both the EM, as well as the mean-field algorithm are
guaranteed to increase the objective function in each step.

In this framework, the standard sequential Gibbs sampler can also be interpreted as 
a blocked Gibbs sampler, where blocking is performed with respect to one primal and all
neighboring dual variables. Unfortunately, as blocking generally improves
mixing of a Gibbs chain \cite{gonzalez2011parallel}, this implies that the standard sequential Gibbs sampler
has better mixing properties than the parallel primal-dual sampling algorithm. Still, the primal dual formulation
allows for more flexible blocking schemes, potentially making it possible to improve on the mixing properties of the standard sequential
Gibbs sampler in some situations.

\section{Experimental Results}
\label{sec:results}
\begin{figure*}[h]
    \centering
    \begin{subfigure}[t]{0.5\textwidth}
        \centering
	\includegraphics[width=\textwidth]{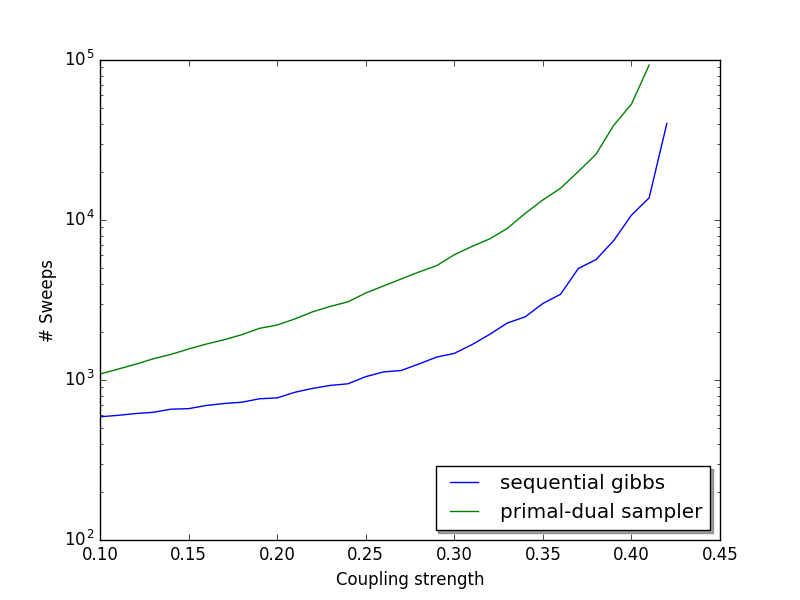}
	\caption{For the Ising model.}
	 \label{mixing-ising}
    \end{subfigure}%
    ~ 
    \begin{subfigure}[t]{0.5\textwidth}
        \centering
	\includegraphics[width=\textwidth]{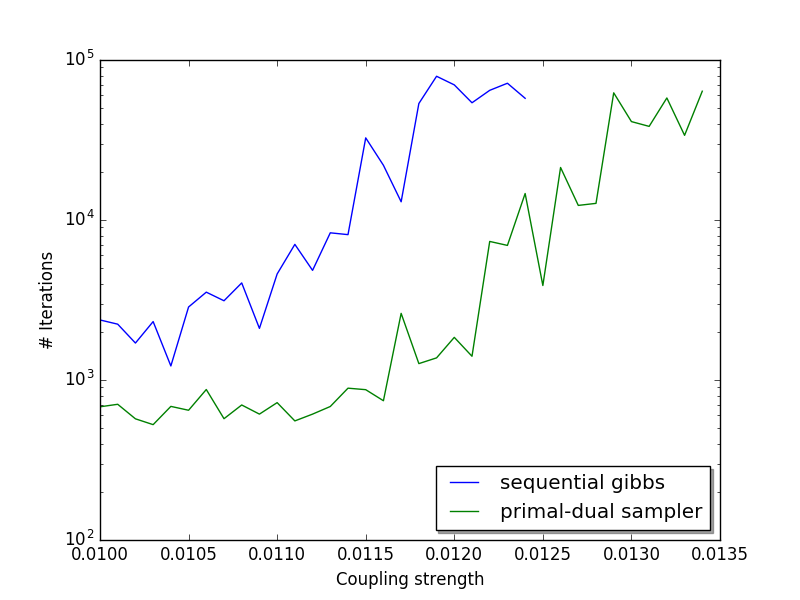}
        \caption{For the fully connected Ising model.}
	\label{mixing-fullising}
    \end{subfigure}
    \caption{Number of iterations needed to achieve a PSRF below $1.01$.}
    
 \centering
\end{figure*}

We tested our method on $3$ synthetic graphical models.
The first model consists of a $50 \times 50$-Ising grid and coupling strenghts ranging from $\beta = 0.1$ to $\beta = 0.5$. Even though the Ising grid is two-colorable and it is therefore trivial to implement a parallel Gibbs sampler in  this setting, this is no longer possible when the Graph topology is dynamic, i.e. we remove and add factors from time to time. Maintaining a coloring in this setting is itself a hard problem. 
The second model is given by a random graph with $N = 1000$ variables
and $F = k\,N$ factors, where
$k \in \{2, 4, 8, 16, 32, 64\}$. Both the unitary and pairwise log-potentials were sampled from a normal distribution with mean $0$ and a standard deviation of $1$.
The last model consists of a fully connected Ising model of $N=100$ variables and 
coupling strengths ranging from $\beta = 0.01$ to $\beta = 0.015$. Note that for such models, there is an algorithm that computes the partition function and the marginals in polynomial time \cite{flach2013class}. However, this is not longer the case, when the potentials have varying coupling strengths.

For all these models we compute the potential scale reduction factor (PSRF) for both a sequential Gibbs sampler and our primal-dual-sampler by running $10$ Markov chains in parallel. From the PSRF we compute an estimate of the mixing time of the Markov chain by taking the first index, so that the PSRF remains below some specified threshhold afterwards.

The result for the Ising grid is shown in Figure \ref{mixing-ising}. For both the primal-dual sampler and the sequential Gibbs sampler, we plotted the number of seeps over the whole grid to achieve a PSRF below $1.01$. As expected, both the sequential Gibbs sampler and the primal-dual sampler mix slower as we increase the coupling strength. Moreover, even though  the primal-dual sampler mixes slower than the sequential Gibbs sampler, in our experiments the ratio of the mixing times was between $2$ and $7$ for all the coupling strength. Therefore, even though a Gibbs sampler based on a two coloring is preferable in the static setting, our primal-dual sampler becomes a viable alternative in the dynamic setting.

Similar results where obtained for the random graphs. As expected, mixing of the primal dual sampler became worse as the number of factors per vertix increases. While our primal-dual-sampler can
be an interesting alternative when the factor-to-vertex ratio is low (e.g. $k \approx 2$), we do not recommend our method for models with many more factors than variables if these factors are
not very weak.

The result for the fully connected Ising-model is shown in Figure \ref{mixing-fullising}. As there is no coloring available for a fully connected graphical model, we compare the number of full sweeps of our primal-dual sampler against the number of single-site updates of the sequential Gibbs sampler. We see that our method leads to improved mixing in this setting. 

%
%
%
%
%
%
%

\section{Conclusion}
\label{sec:conclusion}

We have introduced a new concept of duality to random variales and
showed its usefulness in performing inference in probabilistic graphical
models. In particular, we demonstrated how to obtain a highly-parallizing Gibbs sampler.
Even though this parallel Gibbs sampler has inferior mixing properties compared to the 
sequential Gibbs sampler, we believe that it can still be very useful in settings, where 
a good graph coloring is hard to obtain or the graph topology changes frequently.
Possible extensions of
our approach include good algorithms for selecting appropriate subgraphs
for blocking. Moreover, as primal-dual representations are not unique,
we believe that further progress can be made by deriving new decompositions.
Another line of research is to generalize our ideas to higher order
factors, both exactly and in approximate ways. We believe that this
is possible and allows to apply the methods in this paper to arbitrary
discrete graphical models.

\section*{Acknowledgements}
This work was supported by Microsoft Research through its PhD Scholarship Programme.
\bibliographystyle{plain}
\bibliography{bib/bibliography} 

\end{document}